\newtheorem{strategy}{Strategy}    
\newtheorem{property}{Property}    
\newtheorem{upper bound}{Upper bound}
\begin{document}
\title{Towards Revenue Maximization with Popular and Profitable Products}

\author{Wensheng Gan}
\affiliation{%
	\institution{Jinan University}
	\city{Guangzhou}
	\country{China}
}
\email{wsgan001@gmail.com}

\author{Guoting Chen}
\affiliation{%
	\institution{Harbin Institute of Technology (Shenzhen)}
	\city{Shenzhen}
	\country{China}	
}
\email{chenguoting@hit.edu.cn}

\author{Hongzhi Yin}
\affiliation{%
	\institution{The University of Queensland}
	\city{Queensland}
	\country{Australia}
}
\email{h.yin1@uq.edu.au}

\author{Philippe Fournier-Viger}
\affiliation{%
	\institution{Shenzhen University}
	\city{Shenzhen}
	\country{China}	
}
\email{philfv@szu.edu.cn}

\author{Chien-Ming Chen}
\authornote{This is the corresponding author}
\affiliation{%
	\institution{Shandong University of Science and Technology}
	\city{Qingdao}
	\country{China}	
}
\email{chienmingchen@ieee.org}

\author{Philip S. Yu}
\affiliation{%
	\institution{University of Illinois at Chicago}
	\city{Chicago}
	\country{USA}
}
\email{psyu@uic.edu}


\begin{abstract}
	Economic-wise, a common goal for companies conducting marketing is to maximize the return revenue/profit by utilizing the various effective marketing strategies. Consumer behavior is crucially important in economy and targeted marketing, in which behavioral economics can provide valuable insights to identify the biases and profit from customers. Finding credible and reliable information on products' profitability is, however, quite difficult since most products tends to peak at certain times w.r.t. seasonal sales cycle in a year. On-Shelf Availability (OSA) plays a key factor for performance evaluation. Besides, staying ahead of hot product trends means we can increase marketing efforts without selling out the inventory. To fulfill this gap, in this paper, we first propose a general profit-oriented framework to address the problem of revenue maximization based on economic behavior, and compute the \underline{O}n-shelf \underline{P}opular and most \underline{P}rofitable \underline{P}roducts (OPPPs) for the targeted marketing. To tackle the revenue maximization problem, we model the $k$-satisfiable product concept and propose an algorithmic framework for searching OPPP and its variants. Extensive experiments are conducted on several real-world datasets to evaluate the effectiveness and efficiency of the proposed algorithm.	
\end{abstract}

\keywords{Economy, consumer behavior, on-shelf availability, revenue maximization}

\authorsaddresses{\textbf{Authors' addresses}: 
Wensheng Gan, Jinan University, Guangzhou, China, wsgan001@gmail.com; Guoting Chen, Harbin Institute of Technology (Shenzhen), Shenzhen, China, chenguoting@hit.edu.cn; Hongzhi Yin, The University of Queensland, Queensland, Australia, h.yin1@uq.edu.au; Philippe Fournier-Viger, Shenzhen University, Shenzhen, China, philfv@szu.edu.cn; Chien-Ming Chen, Shandong University of Science and Technology, Qingdao, China, chienmingchen@ieee.org; Philip S. Yu, University of Illinois at Chicago, Chicago, USA, psyu@uic.edu
}

\begin{CCSXML}
<ccs2012>
   <concept>
       <concept_id>10002951.10003317</concept_id>
       <concept_desc>Information systems~Data mining</concept_desc>
       <concept_significance>500</concept_significance>
       </concept>
   <concept>
       <concept_id>10010147.10010257</concept_id>
       <concept_desc>Computing methodologies~Machine learning</concept_desc>
       <concept_significance>300</concept_significance>
       </concept>
 </ccs2012>
\end{CCSXML}

\ccsdesc[500]{Information systems~Data mining}
\ccsdesc[500]{Information systems~User modeling and marketing}
\ccsdesc[500]{Applied computing~Business intelligence}

\renewcommand\shortauthors{W. Gan et al.}

\maketitle

\section{Introduction}

Economic-wise, a common goal for companies conducting marketing is to maximize the return profit by utilizing various effective marketing strategies. Consumer behavior plays a very important role in economy and targeted marketing \cite{peng2012finding,xu2016product,yang2016influence,zhang2018explainable}. A successful business influences the behavior of consumers to encourage them buying its products. In marketing, behavioral economics can provide valuable insights via various web services, e.g., Amazon, by helping people to identify the biases and profit from all customers. On the other hand, manufacturers can use the information of consumers' requirements on various products to select appropriate products in the market. As a result, evolving the ecosystem of personal behavioral data from web services has many real applications \cite{agrawal1994fast,geng2006interestingness,han2004mining,quattrone2016benefits}. However, understanding consumer behavior is quite challenging, such as finding credible and reliable information on products' profitability. 

The consumer behavior analytics is crucially important for decision maker, which can be used to support the global market and has attracted many considerable attentions \cite{ahmed2009efficient,tseng2013efficient,peng2012finding,xu2016product,yang2016influence}. In economics, \textit{utility} \cite{marshall2009principles} is a measure of a consumer's preferences over alternative sets of goods or services. Specifically, only few works of data mining and information search have been studied \cite{teng2015effective,quattrone2016benefits,xu2016product,zhang2016economic,zhang2018explainable}, from a economic perspective, to study targeted marketing revenue/profit maximization based on economic behavior.  Consider the consumer choice \cite{coleman1992rational} and the preferences constraint, several crucial factors for the task of commerce has been succeeded (w.r.t. profit/utility maximization): i) rational behavior; ii) preferences are known and measurable; iii) inventory management; and iv) price. To obtain higher profit from the products, the decision-maker should find out the most profitable and popular products from the historical records. The reason is that the business people would likely know the amount of profit in their business, but it is, difficult to know which product makes the most money \cite{peng2012finding,xu2016product}.

In recent decades, there are many studies have been proposed for computational economics which explores the intersection of economics and computation. In the same time, the scientists in field of computer science also incorporate some interesting concepts (e.g., utility theory) form Economics into various research domains, such as data mining, database, social network, Internet of Things, etc. In the field of data mining which also called knowledge discovery form data (KDD), there is tremendous interest in developing novel utility-oriented methodologies and models for obtaining insights over rich data. Therefore, a new utility-oriented data mining paradigm called utility mining \cite{tseng2013efficient,gan2018survey,gan2020survey} becomes an emerging technology and successfully be applied to various fields. For example, high-utility itemset mining \cite{tseng2013efficient,mai2017lattice,nguyen2019mining}, high-utility sequence mining \cite{yin2012uspan,lan2014applying,wang2018incremental}, and high-utility episode mining \cite{wu2013mining,lin2015discovering} been extensively studied to deal with different types of data, including itemset-based transactional data, sequence data, and complex event data.

Most products operate on a seasonal sales cycle that tends to peak at a certain time in a year. How to sell the seasonal products to make profit maximization, the sale department can find what products are most likely purchased on special periods, by utilizing different seasonal consumer behavior data (i.e., weekly sales data, monthly sales data, quarterly sales data, and annual sales data). Despite On-Shelf Availability (OSA) is a key factor \cite{corsten2003desperately}, out of stock levels on the shelf, as shown in Fig. 1\footnote{\url{https://www.groceryinsight.com/blog/}}, still remains highly persistent, and today's economic environment is more challenging since it is even more critical than ever for retailers and manufacturers to ensure that every product a customer wants to buy is available every time. The historical data was analyzed to classify the groups of products and identify those are most likely to be sold-out first. A sustainable OSA management process should consider the frequency and quantity of ordering, as well as the inventory, to address the root causes of out-of-stocks. In general, the seasonal products may become popular/hot on some special periods. The popular/hot products that are flying off the shelves. For example, sunglasses may sold out during the summer, but for the coming up winter, fewer people would purchase it. 

\begin{figure}[htbp]
\centering
\includegraphics[scale=0.10]{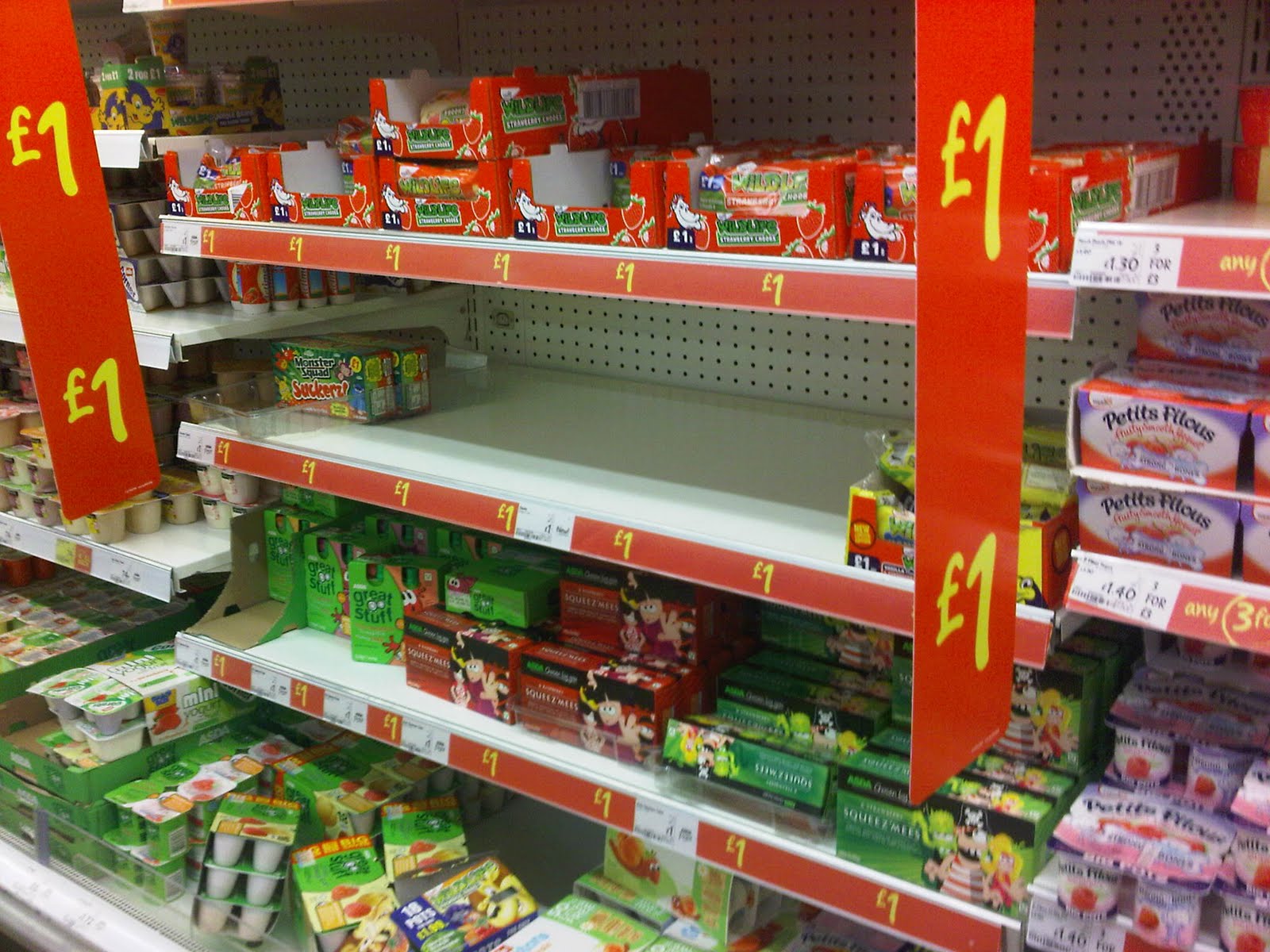}
\captionsetup{justification=centering}
\caption{Out of stock products.}
\label{fig:OSA_Product}	
\end{figure}

Product bundling is one of the sales strategy to make the successful commerce. Use product bundling to pair less profitable (even has negative profit) or slower-moving products with more profitable ones to reduce the storage savings of the less valuable products, which making more space to keep the products with higher profitability. To improve the overall business' profitability, it is important to decide when and how to adjust product pricing and sell bundled products for further increasing profitability. Compared to the hot-sold out products, finding the most profitable products can increase the overall profitability \cite{ge2015dominance}. Ensuring most hot and profitable products on the shelf is essential for any retailer, but even today, it still remains a major challenge.

Motivated by existing research in economics, in this paper, we propose a profit-oriented framework to address the problem and compute the on-shelf popular and most profitable products for the targeted marketing. We also model $k$-satisfiable product searching and propose an algorithmic framework. The principle contributions of this paper are summarized as follows:

\begin{itemize}
\item This is the first work to systematically study the problem of computing on-shelf hot and most profitable products for the targeted marketing  based on economic behavior, including purchase frequency, purchase time periodic, on-shelf availability and utility/profit theory. It can help us to make understand users' economic behaviors, find out the \underline{O}n-shelf \underline{P}opular and most \underline{P}rofitable \underline{P}roducts (OPPPs), then make targeted marketing with profit maximization.	 

\item The solution space of the designed approach can be reduced to a finite the number of points by tree-based searching technique. Two compact data structures  are developed to store the necessary information of the databases. 

\item The concept of remaining positive profit is adopted to calculate the estimated upper bound. Based on developed pruning strategies, the OP3M algorithm can directly discover OPPPs using OPP-list with only twice database scans. Without the candidate generation-and-test, it performs a depth-first search by spanning the search space during the  constructing process of the OPP-list. 

\item The extensive performance evaluation on several real-world e-commerce datasets demonstrates the effectiveness and efficiency of the proposed OP3M framework.
\end{itemize}

The rest of this paper is organized as follows. Some related works are reviewed in Section \ref{sec:relatedwork}. The preliminaries and problem statement are given in Section \ref{sec:preliminaries}. Details of the proposed OP3M algorithm are described in Section \ref{sec:technic}. The evaluation of the effectiveness and efficiency of the proposed OP3M framework are provided in Section \ref{sec:experiments}.  Finally, some conclusions are drawn in Section \ref{sec:conclusion}.

\section{Related Work}  \label{sec:relatedwork}

Our research is related to the work in computational economic, utility-based mining, other profit-oriented searching and mining works. In particular, the advent of Internet has resulted in large sets of user behavior records, which makes it possible for targeted marketing to maximize the return profit. Up to now, evolving the ecosystem of personal behavioral data from web services has many real applications \cite{agrawal1994fast,geng2006interestingness,han2004mining,quattrone2016benefits,fournier2017survey}. For example, frequency-based pattern or rule mining \cite{agrawal1994fast,han2004mining} is one of the common  approaches to discover hidden relationships among items in the transaction. Different from the frequency-based mining model, the rare pattern mining framework that aim at discovering the non-frequent but interesting patterns also has been proposed \cite{koh2016unsupervised}. Geng and Hamilton \cite{geng2006interestingness} reviewed some measures that are intended for selecting and ranking patterns according to the potential interest to the user. The consumer behavior analytics is crucially important for decision-maker, which can be used to support the global market and has attracted many considerable attentions \cite{tseng2013efficient,peng2012finding,xu2016product,yang2016influence}. Recently, some researchers studied the profit of web mining such as profit-oriented pattern mining \cite{ahmed2009efficient,tseng2013efficient}, association \cite{yang2007towards}, market share \cite{quattrone2016benefits}, and decision making to maximize the revenue from products \cite{agrawal1994fast,geng2006interestingness,teng2015effective}. Periodicity is prevalent in physical world, and many events involve more than one periods. The previous profit-oriented works \cite{liu2005two,tseng2013efficient} and skyline operator \cite{ugarte2017skypattern} have not yet optimized to handle temporal on-shelf data, on-shelf availability \cite{corsten2003desperately}, even considering both positive and negative unit profits. In this class of data regarding time, the period of interest needs to be added as an additional constraint to be evaluated together with the decision criteria of the addressed problem.

In economics, \textit{utility} \cite{marshall2005principles} is a key measure of a consumer's preferences over alternative sets of goods or services. It is a basic building block of rational choice theory \cite{coleman1992rational}. Up to now, some works that exploring the economic behavior data have been studied \cite{teng2015effective,quattrone2016benefits,xu2016product,zhang2016economic}, from the data mining and information search perspectives. For example, considering the utility concept, a new data mining paradigm called utility mining \cite{tseng2013efficient,gan2018survey,gan2020survey,mai2017lattice,wang2018incremental} has been extensively studied and applied to different applications. Although a straightforward enumeration of all high-utility patterns (HUPs) sounds promising, it unfortunately does not yield a scalable solution for utility computation of patterns. The reason is that utility in HUPs does not hold the well-known Apriori property \cite{agrawal1994fast} (aka the downward closure property \cite{agrawal1994fast}) \cite{ahmed2009efficient,tseng2013efficient,gan2018survey,gan2020survey}. Utility mining has been extended to deal with different types of data, including itemset-based transactional data \cite{tseng2013efficient,mai2017lattice}, sequence data \cite{yin2012uspan,lan2014applying,wang2018incremental}, uncertain data \cite{2lin2016efficient,lin2017efficiently}, and complex event data \cite{wu2013mining,lin2015discovering}. Furthermore, various interesting and challenging issues about utility mining have been addressed, including top-$K$ high-utility pattern mining \cite{tseng2016efficient,duong2016efficient,yin2013efficiently}, utility mining in dynamic databases \cite{lin2015incremental,lin2016fast,gan2018survey}, mining high-utility patterns by taking different special constraints into account \cite{lin2015efficient,1lin2016fast,lan2015fuzzy,nguyen2019mining}, and privacy preserving utility mining \cite{gan2018privacy}, etc. Overall, utility-driven pattern mining has been shown to be of considerable value in a wide range of applications.

There are many heuristic search algorithms in artificial intelligence such. Up to now, there has been several studies about heuristic search \cite{guns2011declarative}, constraint programming \cite{guns2011itemset,guns2011k}, and multi-objective optimization  \cite{ugarte2017skypattern} such as Pareto for pattern mining. Notice that exhaustive search strategy explores many possible subsets, while heuristic search strategy explores a limited number of possible subsets. Thus, they are different. In particular, in heuristic search methods, the state space is not fully explored and randomization is often employed. Most of the studies of utility mining and pattern mining aim at discovering an optimal set of interesting patterns under the given constraints, while some approaches may lead to not necessarily optimal result by the heuristic search.

In other related research fields, some interesting  works have applied the utility theory \cite{marshall2005principles} to recommender systems \cite{li2011towards,wang2011utilizing,ying2014semantic,zhang2018explainable}. Wang and Zhang \cite{wang2011utilizing} first incorporates marginal utility into product recommender systems. They adapt the widely used Cobb-Douglas utility function \cite{coleman1992rational} to model product-specific diminishing marginal return and user-specific basic utility to personalize recommendation. Li et al. \cite{li2011towards} highlights that product recommender systems differ from the music or movie recommender systems as the former should take into account the utility of products in their ranking. It employs the utility and utility surplus \cite{zhang2018explainable} theories from economics and marketing to improve the list of recommended product. \cite{zhao2017multi} finds multi-product utility maximization for economic recommendation. These existing product recommender systems, however, do not consider the time periodicity between the products purchased and on-shelf availability. The work in \cite{zhao2012increasing} utilizes the purchase interval information to improve the performance for e-commerce, but it does not consider the on-shelf availability and purchase frequency of products.

\section{Preliminaries and Problem Formulation}
\label{sec:preliminaries}

\subsection{Utility-based Computing Model}

A fundamental notion in \textit{utility theory} is that each consumer is endowed with an associated \textit{utility function}, which is ``a measure of the satisfaction from consumption of various goods and services" \cite{marshall2009principles}. In the context of purchasing decisions, we assume that the consumer has access to a set of products, each product having a price. Informally, buying a product involves the exchange of money for a product. Given the utility of a product, to analyze consumers' motivation to trade money for the product, it is also necessary to analyze consumer behavior. In economics, the utility that a consumer has for a product can be decomposed into a set of utilities for each product characteristic. According to this utility theory, we have the following concepts and formulation. The notations of symbols are first summarized in Table \ref{table_Notation}.

\begin{table}[!htbp]
\centering
\small
\caption{Summary of Notations}
\label{table_Notation}
\begin{tabular}{|c|l|}
	\hline
	\textbf{Symbol} & \textbf{Description}  \\ \hline
	$I$ &  A set of $m$ items/products, \textit{I} = \{\textit{i}$_{1}$, \textit{i}$_{2}$, $\ldots$, \textit{i$_{m}$}\}. \\ \hline
	
	$X$ & A group of products $X$ = $\{i_1,$ $i_2,$ $ \dots, i_j\}$. \\ \hline
	
	$D$	&  A quantitative database, \textit{D} = \{\textit{T}$_{1}$, \textit{T}$_{2}$, $\ldots$, \textit{T$_{n}$}\}.  \\ \hline
	
	\textit{minfre} &   A  minimum frequent threshold. \\ \hline
	
	\textit{minpro} &   A  minimum profit threshold. \\ \hline

	$sup(X)$ &   The total support value of $X$ in $D$. \\ \hline
	
	$\textit{OPPP}$ &   On-shelf most popular and profitable product. \\ \hline
	
	$q(i_j, T_c)$ &   The occurred quantity of an item $i_j$ in $T_c$. \\ \hline
	
	$up(i_j)$ &   Each item $i_j \in I$ has a unit profit. \\ \hline
	
	$p(i_j, T_c)$ &  The profit of an item $i_j$ in $T_c$. \\ \hline
	
	$p(X, h)$ &   The sum of profits of $X$ in a period $h$. \\ \hline	
	
	\textit{RTWU} &  Redefined transaction-weighted utilization. \\ \hline
	
	\textit{pp(X)}  &  The sum of positive profit of $X$ in $D$. \\ \hline
	\textit{np(X)}  &  The sum of negative profit of $X$ in $D$. \\ \hline		
	\textit{rpp(X)}  &  The sum of remaining positive profit  of $X$ in $D$. \\ \hline	
	
	OPP-list  &  List structure with On-shelf Popularity and Profit. \\ \hline
	
	OFU$^{\pm}$-table &   An On-shelf Frequency-Utility (with both positive  \\ &  and negative profit value) table. \\ \hline
	
	\textit{X.list}  & The OPPP-list of a group of product $X$. \\ \hline

\end{tabular}
\end{table}

\begin{example}
Consider an e-commerce database shown in Table \ref{figDatabase}, which will be used as running example in the following sections. Similar to the e-commerce database provided by RecSys Challenge 2015\footnote{\url{https://recsys.acm.org/recsys15/challenge/}} (it contains some negative profit values since many all-occasion gifts are sold.), this example database contains five purchase behavior records ($T_1, T_2, \dots, T_5$) and three time periods ($1,2,3$). Behavior $T_1$ occurred in time period 2, and contains products $b$, $c$, and $e$, which respectively appear in $T_1$ with a purchase quantity of 2, 1 and 3. Table \ref{figUnitProfits} indicates that the external profit w.r.t. unit profit of these products are respectively -\$2, \$4 and \$7. Notice that the negative unit profit of a product $b$ indicates that this product is sold at a loss. 
\end{example}

\begin{table}
\centering
\caption{An e-commerce database}
\begin{tabular}{cclc} \hline
	\textbf{Tid}  & \textbf{User}	&  \textbf{Purchase record} & 	\textbf{Period}  \\ \hline
	$T_1$ &	 $U_1$  &  $(b,2)(c,1)(e,3)$ &	1 \\ 
	$T_2$ &	 $U_2$  &  	$(a,1)(b,1)(c,2)(f,1)$ &	1  \\
	$T_3$ &	 $U_3$  &  	$(a,3)(b,6)(c,4)(d,1),(e,1),(f,2)$ &	2 \\
	$T_4$ &	 $U_4$  &  	$(c,3)(d,3)(e,1)$ &	2 \\ 
	$T_5$ &	 $U_5$  &  	$(a,1)(d,2)(e,3)(f,1)$ &	3 \\ \hline
\end{tabular}
\label{figDatabase}
\end{table}

Given a time-varying \emph{e-commerce database} such that $D$ = \{$T_1$, $T_2$, $\dots$, $T_n$\} containing a set of temporal consumer purchase behaviors. Each transaction $T_c$ is a behavior record of one consumer, $T_c$ $\in D$ is a subset of $I$, and $T_c$ has a unique identifier $c$ called its \textit{Tid}. Let $I$ be a set of distinct products/items, $I$ = \{$i_1$, $i_2$, $\dots$, $i_m$\}. Each product/item $i_j \in I$ is associated with a positive or negative number $up(i_j)$, called its \emph{unit profit}. For each transaction $T_c$ such that $i_j$ $\in$ $T_c$, a positive number $q(i_j$, $T_c)$ is called \emph{purchase quantity} of $i$. Let \textit{PE} be a set of positive integers representing time periods, for any given period, this could be a weekly, monthly, quarterly or yearly timespan. Note that each transaction $T_c \in D$ is associated to a time period $pe(T_c) \in PE$, representing the duration time in a period, which the transaction occurred.

In general, the profit of $X \subseteq I$ is associated to the cost price and selling price. For the addressed problem in this paper, assume that the unit profit of each distinct product has been given in the pre-defined \textit{profit-table}, as shown in Table \ref{figUnitProfits}.  As mentioned before, it is usually seen in cross-promotion with negative profits.  Although giving away a unit of product $\{b\}$ results in a loss of \$4 for the supermarket, selling bundled products $\{(c,1) (e,3)\}$ that are cross-promoted with $\{b\}$ generates \$21 profit.

\begin{table}
\centering
\caption{External profit values (unit profit)}
\begin{tabular}{|c|c|c|c|c|c|c|} \hline
	\textbf{Product} & $a$ & $b$ & $c$ & $d$ & $e$ & $f$  \\ \hline
	\textbf{Profit} (\$) & 3 & -2 & 4 & 1 & 7 & 5 \\ \hline
\end{tabular}
\label{figUnitProfits}
\end{table}

Given a set of products $I$ and a set of customers $C$ = \{$c_1$, $c_2$, $\dots$, $c_j$\}, the market contribution of a group of products  $X$ = \{$i_1$, $i_2$, $\dots$, $i_j$\} is related to the profit from $X$ after marketing. Finally, the contribution of a group of products $X$ becomes the sum of the profits it receives from all the customers in the market. Therefore, the key concepts used in this paper are first introduced as follows. A utility function is a map $U$: $X \longrightarrow \Re$. The \emph{profit of a combined product} $X$ (a group of products $X \subseteq I$) in a transaction $T_c$ is:
\begin{equation} p(X, T_c) = \sum_{i_j \in X \wedge X \subseteq T_c} {p(i_j, T_c)}.  \end{equation} where $p(i_j, T_c)$ is the \emph{profit of a product} $i_j \in I$ in a transaction $T_c$, and $p(i_j, T_c)$ can be calculated as $p(i_j, T_c)$ = $up(i_j)$ $\times$ $q(i_j, T_c)$. It represents the profit generated by products $i \in X$ in $T_c$. Consider the set of time periods where $X$ was sold, the \emph{time periods (on-shelf time) of a group of products} $X \subseteq I$ becomes $os(X)$ = $\{pe(T_c) | T_c \in D$ $\wedge X$ $\subseteq T_c\}$. Let $p(X, h)$ denote the \emph{profit of a group of products} $X \subseteq I$ in a time period $h \in os(X)$, then by Eq. (2) we have: 
\begin{equation} p(X, h) = \sum_{T_c \in D \wedge h \in os(X) } {p(X, T_c)}.\end{equation}

By Eq. (3) we have the \emph{overall profit} of a group of products $X \subseteq I$ in an e-commerce database $D$ as $p(X)$ = $\sum_{h \in os(X)} {p(X,h)} $. Thus, given a group of products $X$, let $top(X)$ denote the \emph{total profit of the time periods about} $X$, then it can be represented in a function as follows: \begin{equation} top(X) = \sum_{h \in os(X) \wedge T_c \in D} {tp(T_c)}, \end{equation} where $tp(T_c)$ is the \emph{transaction profit} (\textit{tp}) of a transaction $T_c$, i.e., \begin{equation} tp(T_c) = \sum_{i \in T_c}{p(i, T_c)}.\end{equation} Let  $rp(X)$ denote the \emph{relative profit of a group of products} $X \subseteq I$  in an e-commerce database $D$, by Eq. (4) we have $rp(X)$ = $p(X)$/$top(X)$, and it represents the percentage of the profit that was generated by $X$ during the time periods where $X$ was sold.

\begin{example}
The profit of product $e$ in $T_1$ is $p(e, T_1)$ = 3 $\times $ \$7 = \$21, and the profit of products $\{c,e\}$ in $T_1$ is  $p(\{c,e\}, T_1)$ =  $p(c, T_1)$ + $p(e, T_1)$ = 1 $\times$ \$4 + 3 $\times $ \$7 = \$25. The time periods of $\{c,e\}$ are  $os(\{c,e\})$ = \{period 1, period2\}. The profit of $\{c,e\}$ in periods 1 and 2 are respectively $p(\{c,e\}$, \textit{period} 1) = \$25, and $p(\{c,e\}$, \textit{period} 2) = \$42. The profit of $\{c,e\}$ in the database is $p(\{c,e\})$ = $p(\{c,e\}$, \textit{period} 1) + $p(\{c,e\}$, \textit{period} 2) = \$25 + \$42 = \$67. The transaction profit of $T_1$,  $T_2$, $\dots$,  $T_5$ are $tp(T_1)$ = \$21, $tp(T_2)$ = \$14, $tp(T_3)$ = \$31, $tp(T_4)$ = \$20 and $tp(T_5)$ = \$31. The total profit of the time periods of $\{c,e\}$ is $top(\{c,e\})$ = $tp(T_1)$ + $tp(T_3)$ + $tp(T_4)$  = \$72. The relative profit of $\{c,e\}$ is $rp(\{c,e\}$ = $ p(\{c,e\})$/$top(\{c,e\})$ = \$67 / \$72 = 0.93.
\end{example}

\subsection{On-Shelf Availability}

On-Shelf Availability (OSA) \cite{corsten2003desperately} is a key factor of product for sale to the customer. It is impacted by a host of different factors, all along with the supply chain. Out of Stock (OOS) \cite{corsten2003desperately} is also known as stock-out, it is a situation where the retailer does not physically possess a particular product category, on its shelf, to sell this product to the customer. It can be estimated from store inventory data.

The retail industry being the highly competitive field, being able to fulfill customer expectations and the demands has become the most essential element in order to get sustainable growth and profit margin. Among them, on-shelf availability plays a key indicator for the retail industry, which can greatly impact the profit and customer loyalty. On the basis of the dataset shown in Table \ref{figDatabase}, and by taking the OSA, popularity and profit into account as the primary decision criteria, details of each product are shown in Table \ref{table:multiattributes}.

\begin{table}[htb]
\centering
\caption{Details of each product}
\label{table:multiattributes}
\begin{tabular}{cccll}
	\hline
	\multirow{2}*{\textbf{Product}}&
	\multirow{2}*{\textbf{Profit}}&
	\multirow{2}*{\textbf{Quantity}}
	&\multicolumn{2}{c}{\textbf{Seasonal period}}\\
	\cline{4-5}
	&&& \textbf{Start} &  \textbf{End} \\ \hline
	$a$ & \$12   & 4  &  1  & 3 \\
	$b$ & -\$18  & 9  &  1  & 2 \\
	$c$ & \$40   & 10 &  1  & 2 \\ 
	$d$ & \$6    & 6  &  2  & 3 \\ 
	$e$ & \$56   & 8  &  1  & 3 \\ 
	$f$ & \$20   & 4  &  1  & 3 \\ 
	\hline
\end{tabular}
\end{table}

Let $sup(h)$ denote the frequency of a time period $h$ w.r.t. $sup(h)$ = $|$the number of ${T_c \in h}|$, and $sup(X,h)$ denote the number of $|X \subseteq T_c \wedge T_c \in h|$, then the relative frequency of a group of products $X$ for a time period $h$ can be defined as:
\begin{equation} rf(X, h) = sup(X,h)/ sup(h). \end{equation}

Consider products $\{c\}$ and $\{a,c\}$ in period 1, $sup(\textit{period} 1)$ = 2, thus $rf(\{c, \textit{period} 1\})$ = $2/2$ = 1.0, and $rf(\{a,c\}, \textit{period} 1)$ = $1/2$ = 0.5. Besides, $rf(\{c, \textit{period} 1\}) >$  $rf(\{a,c\}, \textit{period} 1)$. The relative profit of a group of products $X$ for a time period $h$ is 
\begin{equation} rp(X, h) = p(X, h)/top(h), \end{equation} where $top(h)$ means the total profit of a time period $h$, and $top(h) =\sum_{T_c \in h \wedge T_c \in D} {tp(T_c)}$. Consider products $\{c\}$ and $\{a,c\}$ in period 1, $top(\textit{period} 1)$ = $tp(T_1)$ + $tp(T_2)$ = \$21 + \$14 = \$35, thus $rp(\{c, \textit{period} 1\})$ = \$12$/$\$35 = 0.343, and $rp(\{a,c\}, \textit{period} 1)$ = \$11$/$\$35 = 0.314.

\subsection{Problem Formulation}

A group of products $X$ is said to be the \emph{on-shelf most popular and profitable products} (OPPP) if it is popular in one or more periods $pt$ (its occurred relative frequency is no less than a user-specified minimum frequent threshold $minfre$), and it is high profitable in  $pt$ (its relative profit $rp(X)$ is no less than a user-specified minimum profit threshold $minpro$ given by the user $(0 \leq minpro \leq 1)$. Otherwise, $X$ is a \emph{non-OPPP}.

For clarity, when we use the terms of ``\textbf{OPPP}'', which indicates ``\underline{\textbf{O}}n-shelf most \underline{\textbf{P}}opular and \underline{\textbf{P}}rofitable \underline{\textbf{P}}roduct''. The OPPP makes maximal profit with various customer satisfactions in terms of on-shelf period, high popular and high profitable products. Thus, the significant concept of OPPP is actually a $k$-satisfiable product. If a product satisfies at least $k$-constraints (i.e., OSA constraint, popular w.r.t. inventory control, high profitable product), we say that this product is $k$-satisfiable product for maximizing profit where $k$ is a user's predefined parameter and a non-negative integer. In the addressed OP3M problem and the given running example, $k$ is equal to 3. To satisfy the customers, retailers must be able to obtain to their feedback and improve the services. Regarding to the above analytics, the decision maker can make the efficient business strategy and decision, which can improve the  overall profitability in his/her business. Ensuring the on shelf product is essential for any retailer, but even today it remains a major challenge.

\textbf{Problem statement.} \emph{The problem by computing the most popular on-shelf and profitable products for the targeted marketing} is to discover all significant OPPPs in an e-commerce database containing unit profit values are positive. The \emph{problem by computing the most popular on-shelf and profitable products for targeted marketing with negative values} is to discover all OPPPs in an e-commerce database where external unit profit values are positive or negative.

A naive way for this problem is to enumerate all possible subsets of products $I$, then calculate the sum of the frequency and profits of each possible subset, and choose the frequent subsets with the highly sum profit. However, this approach is not scalable because there is an exponential number of all possible subsets. This motivates us to propose an efficient algorithm named OP3M for the searching problem of OPPPs. For efficient multi-criteria decision analyses, as mentioned previously, we can utilize heuristic search \cite{guns2011declarative}, constraint programming \cite{guns2011itemset,guns2011k}, and multi-objective optimization  \cite{ugarte2017skypattern}.  An alternative approach is to formulate a truly multi-objective optimization problem where the heuristic search tries to optimize for each criterion with respect to each other. However, the state space of heuristic search methods is not fully explored and randomization is often employed. Therefore, OP3M utilizes exhaustive search strategy with various user-specified constraints instead of heuristic search. Overall, OP3M is an exact utility-based framework but not a randomize one.

\section{The Proposed OP3M Algorithm}
\label{sec:technic}

\subsection{Properties of On-Shelf Availability}

It can be demonstrated that the popularity measure is anti-monotonic \cite{agrawal1994fast,ugarte2017skypattern}, any superset of a non-popular pattern cannot be a popular pattern, while (relative) profit measure is not monotonic or anti-monotonic \cite{liu2005two,tseng2013efficient}. In other words, a product may have a lower, equal or higher profit than that of the profit of its subsets. We extend the concept of \emph{transaction-weighted utilization} (\textit{TWU}) \cite{liu2005two,ahmed2009efficient} in OPPP to show properties of on-shelf availability. For a given time period $h$, let $\textit{RTWU}(X, h)$ denote the \emph{redefined transaction-weighted utilization} of a group of products $X$ in $h$, thus it is the sum of the redefined transaction profit of transactions from $h$ containing $X$.

To handle the negative profit of product, the \emph{redefined transaction-weighted utilization} (\textit{RTWU}) \cite{lin2016fhn} of a group of products $X$ is defined as: 
\begin{equation}  \textit{RTWU}(X) = \sum_{T_c \in D \wedge X \subseteq T_c}{rtp(T_c)}.  \end{equation} where $rtp(T_c)$ is the \emph{redefined transaction profit} (abbreviated as \textit{rtp}) of a transaction $T_c$, that is $rtp(T_c)$  = $\sum_{x \in T_c \wedge p(x) >0}{p(X, T_c)}$. Thus, $rtp(T_c)$ contains the sum of the positive profit of the products in $T_c$, while negative external profits are ignored. Similarly, the \emph{redefined transaction-weighted utilization} of a group of products $X$ for a time period $h$ can be represented as: 

\begin{equation}  \textit{RTWU}(X, h) = \sum_{T_c \in D \wedge X \subseteq T_c \wedge pe(T_c) \subseteq h}{rtp(T_c)}. \end{equation}  

For the running example (considering that $b$ has an external profit value of $-\$2$), the \textit{rtp} of $T_1$, $T_2$, $T_3$, $T_4$, and $T_5$ are respectively \$25, \$16, \$43, \$20 and \$31. The \textit{RTWU} of products $a$, $b$, $c$, $d$, $e$, and $f$ are respectively \$90, \$84, \$104, \$94, \$119 and \$90. With reflexivity of OSA makes sense at all, the continuous utility function has several components, and the addressed problem becomes quite complicated. According to previous studies \cite{lin2016fhn,fournier2015foshu}, we generalize the properties of on-shelf availability for the case of an e-commerce database with time-sensitive periods as follows.

\begin{property}
\label{proptwu0}
\rm The \textit{RTWU} of a group of products $X$ for a period $h$ is an upper bound on the profit of $X$ in period $h$, that is $\textit{RTWU}(X, h)$ $\geq $ $p(X, h)$. 
\end{property}

\begin{property}
\label{proptwu2}
\rm The \textit{RTWU} measure is anti-monotonic in the whole database or in a specific period. Let $X$ and $Y$ be two products, if $X \subset Y$, then $\textit{RTWU}(X)$  $\geq $ $\textit{RTWU}(Y)$; for a period $ h $, it has $\textit{RTWU}(X,h)$ $\geq$ $\textit{RTWU}(Y, h)$.
\end{property}

\begin{property}
\label{twuprune}
\rm Let $X$ be a group of products, if $\textit{RTWU}(X)$/$top(X)$ $<$ \textit{minpro}, then the product $X$ is low profit as well as all its supersets.
\end{property}

\begin{property} 
\label{proptwu1}
\rm The \textit{RTWU} of a group of products $X$ divided by the total profit of its time periods $h$ is higher than or equal to its relative profit in periods $h$, i.e., $\textit{RTWU}(X, h)$ / $top(X)$ $\geq$ $rp(X, h)$. It is an upper bound on the relative profit of a group of products. 
\end{property}

\begin{property} 
\label{proptwu4}
\rm Given a group of products $X$, if there does not exist a time period $h$ such that $\textit{RTWU}(X, h)$/$top(X)$ $\geq$ \textit{minpro}, then $X$ is not an profitable on-shelf product. Otherwise, $X$ may or may not be an profitable on-shelf product. 
\end{property}

Consider products $\{c\}$ and $\{a,c\}$ in period 1, the values $\textit{RTWU}(\{c\})$ and $\textit{RTWU}(\{a,c\})$ are \$90 and \$59, which are the overestimations of $p(\{a\})$ = \$39 and $p(\{a,c\})$ = \$36, thus respecting Property \ref{proptwu1}. Suppose the above properties we've been studying, and the previous proposition true, we still cannot easily solve the OP3M  problem.

\subsection{Properties of Positive and Negative Profits}

Economic profit can be positive, negative, or zero. If your business generates a negative profit, this means that, for the time those products are sold less than their cost price. Thus, some products went less successful than others and got negative profit. Nevertheless, all profit together make a whole profit for each cosmetic product, which is common seen in the successful business. What if the profit target is negative and the final actual result is positive? The following properties hold.

First, let the total order $\prec$ on products in the designed algorithm adopts the \textit{RTWU} ascending order of products, and negative products always succeed all positive products. In the running example, the \textit{RTWU} of six products $a$, $b$, $c$, $d$, $e$, and $f$ are respectively as \$90, \$84, \$104, \$94, \$119, and \$90, thus the total order $\prec$ on products is $ a \prec$ $f \prec$ $d \prec c$ $\prec e \prec b $. The complete search space of the addressed problem can be represented by a Set-enumeration tree \cite{rymon1992search} where products are sorted according to the previous total order $\prec$. By utilizing the OPP-list and OFU$^{\pm}$-table, we named this tree as OPP-tree. In this OPP-tree, according to the total order $\prec$, all child nodes of any tree node are called its extension nodes.  For any products (product-sets) $X$, let $ pp(X) $ and $ np(X) $ respectively denote the sum of positive profits and the sum of negative profits of $X$ in a transaction or period or database, such that $ p(X)$ = $pp(X)$ + $np(X) $. We have the following important observations.

\begin{property}	
\rm Relationship between positive profits and negative profits of a group of products: $ np(X)$ $\leq p(X)$ $\leq pp(X) $ in a transaction or period or database \cite{lin2016fhn}. We respectively denote as $ np(X, T_c)$ $\leq p(X, T_c)$ $\leq$  $pp(X, T_c) $ in a transaction $T_c$, $ np(X, h)$  $\leq$ $p(X, h)$ $\leq$  $pp(X, h) $ in a period $h$, and $ np(X)$ $\leq$ $p(X)$ $\leq$ $pp(X) $ in a database.
\end{property}

It can be seen that the positive profit value of a group of products is always no less than its actual profit, while the negative profit value is just the opposite. Thus, the positive profit value is an upper-bound on the profit. However, both $pp(X)$ and $np(X)$ cannot be used to overestimate the profit of a group of products. As an upper-bound on profit,  $pp(X)$ still does not hold the \textit{downward closure} property for the extensions with positive or negative products.

\subsection{OPP-List and OFU$^{\pm}$-Table}

In this subsection, we introduce the new concept called ``list structure of a group of products with its On-shelf Popularity and Profit'' (OPP-list for short) which is a component used for the information storing and calculation. Besides, a new concept called \textit{remaining positive profit} is introduced and applied to obtain the estimated upper-bound, which will be presented in next subsection. First, the OPP-list structure is defined as follows. 

\begin{definition}
\rm Let $ rpp(X,T_c) $ denote the remaining positive profit of a group of products $X$ in a transaction $T_c$. Thus, $ rpp(X,T_c) $ is the sum of the positive profit values of each product appearing after $X$ in $T_c$ according to the total order $\prec$. It is represented as: 
\begin{equation}
rpp(X,T_c) = \sum_{ i_j \notin X \wedge X \subseteq T_c \wedge X \prec i_j }p(i_j,T_c), p(i_j,T_{c}) \geq 0.
\end{equation}
\end{definition}

\begin{definition}
\rm The OPP-list in an e-commerce database $D$ is a set of tuples corresponding to the transactions where $X$ appears. A tuple is defined as $<$$\underline{tid}$, $\underline{pp}$, $\underline{np}$, $\underline{rpp}$, \underline{\textit{period}} $>$ for each transaction $T_{c}$ containing $X$. 
\begin{itemize}
	\item $\underline{tid}$: the transaction identifier of $T_{c}$; 
	\item $\underline{pp}$: the positive profit of $X$ in $T_{c}$, i.e., $ p(X,T_{c}) \geq 0$; 
	\item $\underline{np}$: the negative profit of $X$ in $T_{c}$, i.e., $ p(X,T_{c}) < 0 $; 
	\item \underline{\textit{rpp}}: the remaining positive profit of $X$ in $T_{c}$, w.r.t. $ rpp(X,T_c) $;
	\item \underline{\textit{period}}: the related period where $T_{c}$ is occurred.
\end{itemize}
\end{definition}

\begin{figure}[!htbp]
\centering
\includegraphics[scale=0.5]{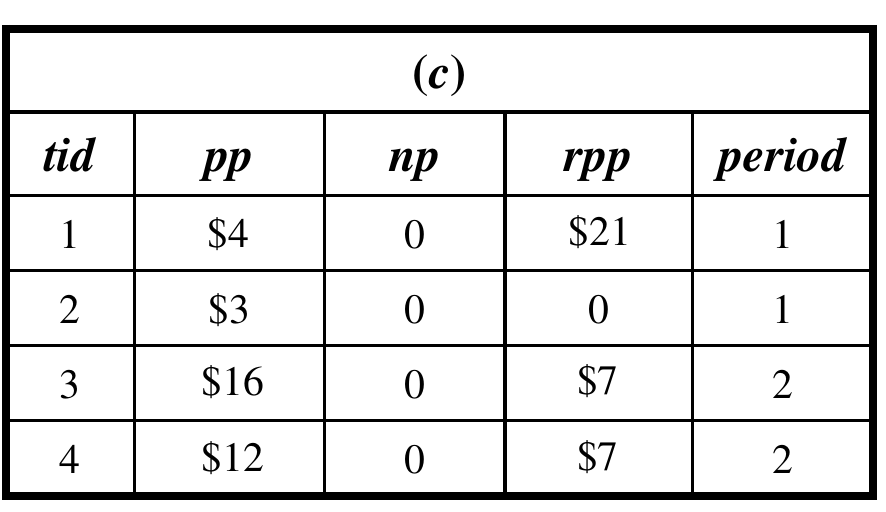}
\captionsetup{justification=centering}
\caption{Constructed OPP-lists of product ($c$).}
\label{fig:OPP-listOfC}
\end{figure}

\begin{example} 	
Since the total order $\prec$ on products is $ a \prec f \prec d \prec c \prec e \prec b $, we have that $ rpp(a,T_3)$ = $p(f,T_3)$ + $p(d,T_3)$ + $p(c,T_3) $ + $p(e,T_3) $ = \$10 + \$1 + \$16 + \$67 = \$34, and $ rpp(\{a,c\},T_{3})$ = $p(e,T_3)$ = \$7. Thus, the OPP-list of product $(c)$ is $\{(T_1$, \$4, 0, \$21, 1), $(T_2$, \$3, 0, 0, 1), $(T_3$, \$16, 0, \$7, 2), $(T_4$, \$12, 0, \$7, 2), $\}$, as shown in Fig. \ref{fig:OPP-listOfC}. It can perform a single database scan to create the all OPP-lists of all 1-products in the processed database. Based on the designed OPP-list and previous studies \cite{lin2016fhn,fournier2015foshu}, we can extract the following information.
\end{example}

For a group of products \textit{X}, let $pp(X, h)$, $np(X, h)$, and $rpp(X, h)$ are respectively the sum of \textit{pp} values, the sum of \textit{np} values and the sum of \textit{rpp} values in a period $h$ w.r.t. OPP-list of \textit{X}, that are:
\begin{equation}
pp(X, h) = \sum_{X \subseteq T_c \wedge T_c \in h}pp(X,T_c);
\end{equation}
\begin{equation}
np(X, h) = \sum_{X \subseteq T_c \wedge T_c \in h}np(X,T_c);
\end{equation}
\begin{equation}
rpp(X, h) = \sum_{X \subseteq T_c \wedge T_c \in h}rpp(X,T_c).
\end{equation}

Similarly, let $pp(X)$, $np(X)$, and $rpp(X)$ are respectively the sum of \textit{pp} values, the sum of \textit{np} values and the sum of \textit{rpp} values for a group of products \textit{X} in the database $D$ w.r.t. OPP-list of \textit{X}, we have:
\begin{equation}
pp(X) = \sum_{X \subseteq T_c \wedge T_c \in D}pp(X,T_c);
\end{equation}
\begin{equation}
np(X) = \sum_{X \subseteq T_c \wedge T_c \in D}np(X,T_c);
\end{equation}
\begin{equation}
rpp(X) = \sum_{X \subseteq T_c \wedge T_c \in D}rpp(X,T_c).
\end{equation}

By utilizing the property of OPP-list, we further design a structure called on-shelf frequency-profit table with both positive and negative value, hereafter termed OFU$^{\pm}$-table. The OFU$^{\pm}$-table of a pattern is built after the construction of its OPP-list, and it stores the following information.

\begin{definition}
\rm  An OFU$^{\pm}$-table of a group of products $ X $ contains nine parts: (1) \textit{\underline{name}}: the name of product $ X $; (2) \textit{\underline{sup(X, h)}}: the support of $ X $ in a period $h$; (3) \textit{\underline{sup(X)}}: the support of $ X $ in database $ D $; (4) \textit{\underline{pp(X, h)}}: the summation of the positive profits of $ X $ in a period $h$; (5)	\textit{\underline{pp(X)}}: the summation of the positive profits of $ X $ in database $ D $; (6) \textit{\underline{np(X, h)}}: the summation of the negative profits of $ X $ in a period $h$; (7) \textit{\underline{np(X)}}: the summation of the negative profits of $ X $ in database $ D $; (8) \textit{\underline{rpp(X, h)}}: the summation of the remaining positive profits of $ X $ in $h $; and (9) \textit{\underline{rpp(X)}}: the summation of the remaining positive profits of $ X $ in $ D $. Notice that there is a HashMap to respectively keep the \textit{\underline{sup(X, h)}}, \textit{\underline{pp(X, h)}}, \textit{\underline{np(X, h)}} and \textit{\underline{rpp(X, h)}} vaules of product $X$ in each period $h$.
\end{definition}

\begin{example} 
The construction process of a OFU$^{\pm}$-table is as follows. 
Consider a group of products $ (c) $ in Table \ref{figDatabase}, it appears in $T_1$, $T_2$, $T_3$, and $T_4$. From the built OPP-list of $ (c) $ which is shown in Fig. \ref{fig:OPP-listOfC}, the OFU$^{\pm}$-table of product $(c)$ is efficiently constructed using the support count, positive profit, negative profit and remaining positive profit. They are calculated during the construction of its OPP-list, and the results of its OFU$^{\pm}$-table are \{$sup(c)$ = 4, $ pp(c,1)$ = \$7, $ pp(c,2)$ = \$28, $ pp(c)$ = $ pp(c,1)$ + $ pp(c,2)$ = \$35, $ np(c,1)$ = $ np(c,2)$ = $ np(c)$ = \$0, and  $ rpp(c,1)$ = \$21, $rpp(c,2)$ = \$14, $ rpp(c)$ = $ rpp(c,1)$ + $ rpp(c,2)$ = \$35\}. 
\end{example}

After initially constructing the OPP-list and OFU$^{\pm}$-table of each 1-product/item, for any $k$-product ($k \geq 2 $), its' OPP-list can be directly calculated using the OPP-lists of some of its subsets, without scanning the database. The construction procedure of the OPP-list and OFU$^{\pm}$-table of a $k$-product, and is shown in Algorithm 1. Let be a  product $X$ and two products $X_a$ and $X_b$, they are extensions of $X$ by respectively adding two distinct products $a$ and $b$ to $X$. The construction procedure takes as input the OPP-lists of $X$, $X_a$ and $X_b$, and outputs the OPP-list and OFU$^{\pm}$-table of the pattern $ X_{ab} $. Specially, it is important to notice that the construction for $k$-product ($ k \geq 3 $, Lines 4 to 7) is different from that of $k$-product ($ k$ = 2, Line 9). For instance, the profit value of \{$a,b$\} is the sum profit value of \{$a$\} and \{$b$\}. And the 3-itemset \{$a,b,c$\} its OPP-list is constructed by the OPP-lists of \{$a,b$\} and \{$b,c$\}, and the sum profit value of \{$a,b$\} and \{$b,c$\} has a duplicate profit value of \{$b$\}. Thus, it should avoid duplication for $k$-product ($ k \geq 3 $). It can be easily implemented, because the all necessary information of ($k$-1)-product ($ k \geq 2 $) has been calculated before constructing the OPP-list of $(k)$-product ($ k \geq 2 $) w.r.t. the extension pattern. 


\begin{algorithm}
\LinesNumbered
\label{AlgorithmConstruct}
\caption{Construction procedure}
\KwIn{$ X $, $ X_{a} $, $ X_{b} $.}
\KwOut{ $ X_{ab}$.} 
set  $ X_{ab}.list \leftarrow \emptyset $, $ X_{ab}.table$ $\leftarrow \emptyset $\; 
\For {each tuple $ E_{a}\in X_{a}.list $}
{
	\If {$ \exists E_{a}\in$ $X_{b}.list$ $\wedge E_{a}.tid$ == $E_{b}.tid $}
	{
		\If{$ X.list \neq \emptyset $}
		{
			Search for element $ E\in X.list $, $E.tid$ = $E_{a}.tid $\;			
			$E_{ab} \leftarrow <E_{a}.tid$, $E_{a}.pp$ + $E_{b}.pp$ - $E.pp$, $E_{a}.np$ + $E_{b}.np$ - $E.np$, $E_{b}.rpp$, $E_{a}.\textit{period}>$\;
			
		}
		\Else
		{
			$E_{ab} \leftarrow <E_{a}.tid$, $E_{a}.pp$ + $E_{b}.pp$, $E_{a}.np$ + $E_{b}.np$, $E_{b}.rpp$, $E_{a}.\textit{period}>$\;
			
		}
		$ X_{ab}.list$ $\leftarrow$ $X_{ab}.list$ $\cup E_{ab} $\;
		update information in the OPP-table for $ X_{ab}$\;
	}	
}
\textbf{return} $ X_{ab} $\
\end{algorithm}


\subsection{Filtering Strategies for Searching}

\begin{lemma}
\label{theorem_GDC}
\rm (\textbf{Anti-monotonicity of the unpromising product with support}).
\rm In the search space w.r.t. OPP-tree, if a tree node is a popular product in the whole database $D$ or a period $h$, its parent node is also a popular product in $D$ or $h$. Let $ X $ be a $k$-products (node) and its parent node  are denoted as $ X'$, a ($k$-1)-products. For a given database $D$ or a period $h$, the \textit{relative frequency} measure is anti-monotonic: $ rf(X) \leq rf(X') $ always holds.
\end{lemma}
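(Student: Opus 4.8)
The plan is to reduce the statement to the classical downward-closure (Apriori) property of the raw support count and then push the inequality through the normalisation by the period size, which is a constant with respect to the product set. First I would recall the structure of the OPP-tree: the parent $X'$ of a node $X$ is obtained from $X$ by deleting its last item according to the total order $\prec$, so $X' \subset X$ and $|X'| = |X|-1$. Consequently every transaction that contains $X$ also contains $X'$, hence for any period $h$ we have the set inclusion $\{T_c \in h : X \subseteq T_c\} \subseteq \{T_c \in h : X' \subseteq T_c\}$, giving $sup(X,h) \le sup(X',h)$; the same argument over all of $D$ gives $sup(X) \le sup(X')$.

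Next I would divide by $sup(h)$. Since $sup(h) = |\{T_c \in D : pe(T_c) = h\}|$ does not depend on $X$ and is positive for any non-empty period, dividing $sup(X,h) \le sup(X',h)$ by $sup(h)$ yields $rf(X,h) \le rf(X',h)$ by Eq. (6), which is the per-period anti-monotonicity. For the database-level quantity, recalling that $X$ is popular in $D$ when $rf(X,h) \ge minfre$ for at least one on-shelf period $h$ (equivalently $rf(X) := \max_{h \in os(X)} rf(X,h) \ge minfre$), I would additionally use $os(X) \subseteq os(X')$ --- a superset can be on shelf only in periods where all of its subsets are --- so that $\max_{h \in os(X)} rf(X,h) \le \max_{h \in os(X)} rf(X',h) \le \max_{h \in os(X')} rf(X',h)$, i.e. $rf(X) \le rf(X')$.

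Finally, the ``popular'' implication is immediate from the inequality: if $rf(X,h) \ge minfre$ then $rf(X',h) \ge rf(X,h) \ge minfre$, so $X'$ is popular in $h$; the same step with $rf(X)$ handles the whole database, and iterating from $X$ up to the root shows that every ancestor of a popular node is popular, which is exactly what the pruning needs. I do not expect a genuine obstacle here; the only points needing care are making explicit that $sup(h)$ is constant in $X$ so that division preserves the inequality, excluding the degenerate empty-period case in which $rf$ is undefined, and being precise about how $rf(X)$ aggregates the per-period values together with the inclusion $os(X) \subseteq os(X')$ so that the maximum over the smaller index set is correctly bounded.
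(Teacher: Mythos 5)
Your proposal is correct and follows essentially the same route as the paper: reduce to the classical Apriori inclusion $\{T_c : X \subseteq T_c\} \subseteq \{T_c : X' \subseteq T_c\}$ to get $sup(X,h) \le sup(X',h)$, then observe that dividing by the $X$-independent constant $sup(h)$ preserves the inequality for the relative frequency. Your version is merely more explicit than the paper's one-line proof about the normalisation step and the aggregation over periods via $os(X) \subseteq os(X')$, which is a welcome but not substantively different elaboration.
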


\begin{proof} 
According to well-known Apriori property \cite{agrawal1994fast}, it always exists the relationship $ sup(X,h)$ $\leq$ $sup(X',h) $. Thus, the \textit{downward closure} property of \textit{relative frequency} measure can be hold. 
\end{proof}

\begin{lemma} 
\label{lemma2}
(\textbf{Anti-monotonicity of unpromising product with profit upper-bound}). \rm For any node $X$ in the search space w.r.t. the OPP-tree, the sum of $\textit{SUM}(X.pp)$ and $\textit{SUM}(X.rpp)$ in the OPP-list of $X$ (within a period or the whole database) is larger than or equal to profit of any one of its children (within any period $h$ or the whole database $D$ w.r.t. the whole/maximal period in database).
\end{lemma}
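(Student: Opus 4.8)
The plan is to reduce the claim to a per-transaction inequality and then sum. Write any child (indeed, any descendant) of $X$ in the OPP-tree as $Y = X \cup Z$, where, by the definition of an extension node in the Set-enumeration tree, every product $z \in Z$ succeeds all products of $X$ under the total order $\prec$; note $X \cap Z = \emptyset$. Since $Y \subseteq T_c$ forces $X \subseteq T_c$, every transaction that contributes a tuple to $Y.list$ already contributes a tuple to $X.list$ with the same $tid$, so it is enough to bound $p(Y, T_c)$ by the $pp$ and $rpp$ entries recorded for $X$ in that tuple and then add these bounds up.

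First I would prove the two building blocks. (i) $p(X, T_c) \leq pp(X, T_c)$ for every $T_c \supseteq X$: by the OPP-list tuple definition, $pp(X, T_c) = p(X, T_c)$ when $p(X, T_c) \geq 0$, and $pp(X, T_c) = 0 \geq p(X, T_c)$ otherwise; in particular $pp(X, T_c) \geq 0$. (ii) $\sum_{z \in Z} p(z, T_c) \leq rpp(X, T_c)$: dropping the strictly negative terms only increases the left-hand side, and each remaining term $p(z, T_c) \geq 0$ is one of the summands in the definition of $rpp(X, T_c)$ because $z \notin X$, $X \subseteq T_c$, $X \prec z$, and $p(z, T_c) \geq 0$; since $rpp(X, T_c)$ is itself a sum of non-negative profits, $rpp(X, T_c) \geq 0$ and the bound holds. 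Using $p(Y, T_c) = p(X, T_c) + \sum_{z \in Z} p(z, T_c)$ (disjointness of $X$ and $Z$ inside $T_c$) and combining (i) and (ii) gives the pointwise estimate $p(Y, T_c) \leq pp(X, T_c) + rpp(X, T_c)$.

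Then I would sum over transactions. For the database version, $p(Y) = \sum_{Y \subseteq T_c} p(Y, T_c) \leq \sum_{Y \subseteq T_c} \big( pp(X, T_c) + rpp(X, T_c) \big) \leq \sum_{X \subseteq T_c} \big( pp(X, T_c) + rpp(X, T_c) \big) = pp(X) + rpp(X) = \textit{SUM}(X.pp) + \textit{SUM}(X.rpp)$; the middle inequality enlarges the index set from transactions containing $Y$ to transactions containing $X$ and is licensed exactly because every summand is non-negative by (i) and (ii). Intersecting all of these sums with $\{T_c : pe(T_c) \in h\}$ gives the period form $p(Y, h) \leq pp(X, h) + rpp(X, h)$, and specializing $h$ to the maximal/whole period recovers the database form; the same argument with $Z$ a single product gives the statement exactly as phrased for children.

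The main obstacle I anticipate is not depth but careful bookkeeping of the sign conventions: one must keep $pp(\cdot, T_c)$ and $rpp(\cdot, T_c)$ as the clipped, non-negative quantities actually stored in the list rather than the signed profit, so that (a) they dominate, respectively, $p(X, T_c)$ and the positive part of the extension, and (b) their non-negativity justifies passing from the $Y$-support to the $X$-support in the final sum. The only other place where care is needed is invoking that $Z$ lies entirely after $X$ under $\prec$ — this is where the ordering convention of the OPP-tree enters, and it is the sole hypothesis that makes the positive profits of the extension products fall inside $rpp(X, T_c)$; I would state it explicitly.
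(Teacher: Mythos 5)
Your argument is correct and, notably, more than the paper itself offers: the paper states Lemma~\ref{lemma2} without proof, merely asserting afterwards that it ``guarantees'' the bound and citing analogous properties from prior work, so your per-transaction decomposition $p(Y,T_c)=p(X,T_c)+\sum_{z\in Z}p(z,T_c)\leq pp(X,T_c)+rpp(X,T_c)$ followed by summation over an enlarged, non-negatively-weighted index set is a genuine contribution rather than a restatement. Two remarks. First, your reading of the stored $pp$ entry as the clipped value ($pp(X,T_c)=p(X,T_c)$ if non-negative, else $0$) differs from the paper's convention, under which $pp(X,T_c)$ is the sum of the positive-profit items of $X$ in $T_c$ and satisfies $p(X,T_c)=pp(X,T_c)+np(X,T_c)$; fortunately your building block (i), namely $p(X,T_c)\leq pp(X,T_c)$ with $pp(X,T_c)\geq 0$, holds under either convention (it is exactly the paper's Property~6), so the chain of inequalities survives, but you should align the definition with the paper's to avoid the appearance of proving a bound on a different quantity. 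Second, your extension from children to arbitrary descendants $Y=X\cup Z$ is exactly what is needed for Strategy~2 to be sound as a subtree-pruning rule, and your explicit isolation of the ordering hypothesis ($X\prec z$ for all $z\in Z$, which places the positive profits of the extension inside $rpp(X,T_c)$) identifies the one structural fact about the OPP-tree that the lemma actually depends on; both points are left implicit in the paper.
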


Thus, there exists an upper bound on profit of any pattern/node with respect to a special period. Lemma \ref{lemma2} guarantees that the sum of profits of $ X $ in $D$ or $h$ w.r.t $p(X)$ is always less than or equal to the sum of \textit{SUM}$(X'.pp)$ and \textit{SUM}$(X'.rpp)$ in $D$ or $h$. It ensures that the \textit{downward closure} property of transitive extensions with positive or negative products, based on these observations, we can use the following two filtering strategies.

\begin{strategy}
When performing a depth-first search strategy on the OPP-tree, if the relative frequency of any product $X$  within a  time period $h$ is less than \textit{minfre} (w.r.t. $rf(X,h)$ = $sup(X,h)$/$sup(h)$), any of its child node is not an OPPP, they can be regarded as irrelevant and directly pruned.
\end{strategy}

\begin{strategy}
When traversing the OPP-tree based on a depth-first search strategy, if the sum of \textit{SUM}$(X.pp)$ and \textit{SUM}$(X.rpp)$ of any node/product $X$ within the related period of $X$ is less than \textit{minpro} (w.r.t. $rp(X)$ = $p(X)$/$top(X)$), any of its child node is not an OPPP, they can be regarded as irrelevant and be directly pruned.
\end{strategy}


\subsection{Main Procedure}

To clarify our methodology, we have illustrated the key properties of OSA and profit, the key data structures and the profit upper-bound so far. Utilizing the above technologies, as shown in Algorithm \ref{AlgorithmOP3}, the main procedure takes as input: (1) an e-commerce  database, $D$; (2) a user-specified profit-table, \textit{ptable}; (3) minimum frequent threshold, \textit{minfre}; and (4) a user-specified minimum profit threshold, \textit{minpro}. How to systematically select these thresholds? And how they could be optimized to ensure the performance of the proposed approach? Note that both \textit{minfre} and \textit{minpro} are user-specified based on user's priori knowledge and empiricism. In other wolds, when applying the OP3M algorithm for mining OPPPs in different databases, the parameter setting is different. For example, a pattern may be on-shelf popular and high-profitable in one database, while it may be not in another one. Besides, the number of time periods in a database is inherent characteristic. We can manually set it according to domain knowledge and empiricism, such as week, month, quarter, or year. Therefore, \textit{minfre}, \textit{minpro}, and time periods can be manually specified case-by-case. It is worth mentioning that various parameter settings may lead to different performance. It is hard to optimize them to ensure the performance of the OP3M algorithm in all databases, but them can be tuned to achieve optimal performance in a special database.

The OP3M algorithm first scans the database to calculate $\textit{RTWU}(\{i\})$, $\textit{RTWU}(\{i\},h)$ and $os(\{i\})$ for each product $i$. Moreover, the set of all time periods $PE$ and the profit $top(h)$ of each period $h$ is computed during the first database scan. Notice that for optimization, the set of time periods $os(\{i\})$ of each product $i$ is represented as a bitset where the $k$th-bit is set to 1 if $i$ appear in the period $k$, otherwise 0. The bitset representation can quickly calculate the time periods of any product $X$ = $\{x_1$, $x_2$, $\dots$, $x_n\} $ by using the logical AND operation ($\oplus$), i.e., $os(X)$ = $os(x_1) \oplus os(x_2)$ $\oplus \dots os(x_n)$. 

Then, the algorithm computes for each product $i$ the value  $top(\{i\})$ using $os(i)$ and the profit of periods previously obtained. This allows us to create the set $I^*$ containing all products $i$ such that \textit{RTWU}($\{i\},h$) / $top(\{i\})$ $\geq $ \textit{minpro}. Thereafter, all products not in $I^*$ will be ignored since they cannot be part of OPPPs. The \textit{RTWU} values of products are then used to establish a total order $\succ$ on products, which is the ascending \textit{RTWU} order. A second database scan is then performed and the products in transactions are reordered according to the total order $\succ$; the OPP-list of each product $i \in I^*$ is built. After the construction of the OPP-list, the depth-first search exploration of products starts calling the recursive procedure \textit{Search} with the empty product $\emptyset$, the set of single products $I^*$, \textit{minfre}, \textit{minpro}, and the set of all time periods $PE$.

\begin{algorithm}[h]
\LinesNumbered
\SetKwFunction{Search}{Search}
\SetKwInOut{Input}{input}\SetKwInOut{Output}{output}
\Input{\textit{D}; \textit{ptable}; \textit{minfre}; \textit{minpro}}
\Output{OPPPs} 

\BlankLine
Scan $D$ and \textit{ptable} to calculate \textit{RTWU}($\{i\})$, \textit{RTWU}($\{i\},h)$ and $os(\{i\})$ for each product $i$, and calculate $top(h)$ for each period $h$, as well as the set of all time periods $PE$\;
Find $I^* \leftarrow$ each product $i \in I$ such that \textit{RTWU}($\{i\},h$)/$top(\{i\})$ $ \geq$ \textit{minpro}\;
Sort $I^*$ using the \textit{RTWU} ascending order as the total order $\succ$\; 
Scan $D$ to build the OPP-list of each product $i \in I^*$\;
\Search($\emptyset$, $I^*$, \textit{minfre}, \textit{minpro}, $PE$)\;
\textbf{return} \textit{OPPPs}\
\caption{The OP3M algorithm}
\label{AlgorithmOP3}
\end{algorithm}

The \textit{Search} procedure (Algorithm \ref{AlgorithmSEARCH}) takes as input: (1) a group of products $P$, (2) extensions of $P$ having the form $Pz$ meaning that $Pz$ was previously obtained by appending a product $z$ to $P$, (3) \textit{minfre}, (4) \textit{minpro}, and (5) the time periods of $P$ ($os(P)$). The search procedure operates as follows. For each extension $Px$ of $P$, if the related frequency of $Px$ is no less than \textit{minfre}, and the sum of the actual related profits values of $Px$ in the OPP-list is no less than \textit{minpro}, then $Px$ is output as a OPPP (Lines 2 to 6). Then, it uses the pruning strategies to determine whether the extensions of $Px$ would be the OPPPs and should be explored (Line 7). This is performed by merging $Px$ with all extensions $Py$ of $P$ such that $y \succ x$, $rf(Px,h)$ $\geq $ \textit{minfre} and \textit{RTWU}($\{x,y\},h$) $\geq $ \textit{minpro} (Line 10), to form extensions of the form $Pxy$ containing $|Px|$+1 products. The OPP-list of \textit{Pxy} is then constructed by calling the \textit{Construct} procedure to join the OPP-lists of $P$, $Px$ and $Py$ (Lines 10 to 15). Only the promising OPP-lists would be explored in next extension (Line 14). Then, a recursive call to the \textit{Search} procedure with $Pxy$ is performed to calculate its on-shelf popularity and profit and explore its extension(s) (Line 17).

\begin{algorithm}
\caption{The \textit{Search} procedure}
\label{AlgorithmSEARCH}
\LinesNumbered
\SetKwFunction{Construct}{Construct}\SetKwFunction{Search}{Search}\SetKwFunction{Construct}{Construct}
\SetKwInOut{Input}{input}\SetKwInOut{Output}{output}
\Input{$P$: a group of products,  \textit{ExtenOfP}: a set of extensions of $P$, \textit{minfre}, \textit{minpro}, $os(P)$: the time periods of $P$}
\Output{the set of OPPPs}
\BlankLine
\ForEach{ product $Px \in $ \textit{ExtenOfP}} {
	$os(Px) \leftarrow os(P) \cap os(x) $\;
	Calculate $top(Px)$\;
	\If{$ \exists h \in os(Px)$ such that $rf(Px,h)$ $\geq $ \textit{minfre} $\wedge$  $\textit{SUM}(Px.pp)$ + $\textit{SUM}(Px.np)$/$top(Px)$ $\geq$ \textit{minpro}}{
		output $Px$\;
	}
	
	\If{$ \exists h \in os(Px)$ such that $rf(Px,h)$ $\geq$ minfre $\wedge$ $\textit{SUM}(Px.pp)$ + $\textit{SUM}(Px.rpp)$/$top(Px)$ $\geq$ \textit{minpro}}{
		\textit{ExtenOfPx} $\leftarrow \emptyset$\;
		\ForEach{ product $Py \in$ \textit{ExtenOfP} such that $y \succ x$} {
			$Pxy \leftarrow  Px \cup Py$\;
			\If{$ \exists h \in os(Pxy)$ such that  $\textit{RTWU}(\{x,y\},h)$ / $top(Pxy) \geq \textit{minpro}$}{
				$Pxy.list \leftarrow $ \Construct$(P, Px, Py$)\;
				\textit{ExtenOfPx} $\leftarrow$ \textit{ExtenOfPx} $\cup$ \textit{Pxy.list}\;
			}
		}
		call \Search($Px$, \textit{ExtenOfPx}, \textit{minfre}, \textit{minpro}, $os(Px)$)\;
	}
}

\textbf{return} \textit{OPPPs}\
\end{algorithm}

Complexity analysis. Assume $I$ = $\{i_1,$ $i_2,$ $\dots, i_m\}$ be a finite set of $m$ distinct items, and $D$ = \{$T_1$, $T_2$, $ \dots, T_n$\}. Firstly, a single database scan performs in $O(nz)$ time, where $z$ is the average transaction length. In the worst case, it takes $O(nm)$ time. The construction of OPP-list and OFU$^{\pm}$-table is done in linear time. An exhaustive search of the search space in OP3M would take $O(2^{m} - 1)$ time. However, in real situation, database may be rarely very sparse or very dense. Thus, the number of items in the longest products in OP3M is generally much less than $m$, and search space is $2^{m} - 1$ (the complete number of itemsets in the search space), in the worst case.  The space analysis is as follow. The number of initial OPP-list and OFU$^{\pm}$-table is $|I^*|$, and each OPP-list takes $O(n)$ space if it contains an entry for each transaction. Firstly, the total space required for building the initial OPP-lists of 1-products is in the worst case $O(|I^*| \times n)$ space. Therefore, the worst-case space complexity of OP3M is $O((2^{m} - 1) \times n)$. Incorporating the constraints of frequency, time periods, and profit, the filtering strategies above can lead to a smaller search space than the worst case. In practice, the real search space of the OP3M algorithm is reasonable and it has a linear time, which will be shown in experimental evaluation.

\section{Experimental Study} 
\label{sec:experiments}

In this section, we study the proposed algorithm on several real datasets to evaluate its effectiveness and efficiency. To the best of our knowledge, this is the first work to address the targeted marketing problem for finding the on-shelf popular and most profitable products by considering product frequency, purchase time periodic, on-shelf availability and utility theory. Thus, none existing methods in literature can be reasonably compared here, as the baseline, to evaluate the efficiency (w.r.t. execution time, memory usage, etc.) against the proposed model. 

To analyze the usefulness of the OP3M framework, the derived frequent patterns (FPs, generated by the well-known FP-growth algorithm \cite{han2004mining}), high utility/profitable patterns (HUPs, generated by the FHN algorithm \cite{lin2016fhn}), and OPPPs (generated by the proposed OP3M algorithm) on the same datasets are examined. Thus, in Section 5.2, the well-known FP-growth, UP-growth are conducted as the baseline algorithms. A comparison with the frequency-based FP-growth approach will demonstrate whether the utility-based method is superior. Specially, we also compare the total utility of the derived patterns (e.g., HUPs and OPPPs), to evaluate how can the proposed OP3M algorithm towards revenue maximization. It is worth mentioning that OP3M, as an exact utility-based framework, utilizes exhaustive search strategy with constraints but not heuristic search. Therefore, some methods algorithm with existing heuristics are not compared here.

\subsection{Datasets and Data Preprocessing}

All compared algorithms are implemented using the Java language and executed on a PC ThinkPad T470p with an Intel Core i7-7700HQ CPU @2.80GHz and 32 GB of memory, run on the 64 bit Microsoft Windows 10 platform. Typically e-commerce datasets are proprietary and consequently hard to find among publicly available data. We support reproducibility and use four publicly available  e-commerce datasets\footnote{\url{http://www.philippe-fournier-viger.com/spmf/}} (mushroom, chess, retail, and kosarak) in our experiments.  These datasets have varied characteristics and represented the main types of data typically encountered in real-life scenarios. The characteristics of used datasets are described below in details. 

$ \bullet $ \textit{\textbf{mushroom}}: a very dense dataset containing 8124 transactions with 119 distinct items. Its average item count per transaction is 23, with a density ratio as 19.33\%.

$ \bullet $  \emph{\textbf{chess}}: it contains 3,196 transactions with 75 distinct products and an average transaction length of 36 products. It is a very dense dataset, with a density ratio as 49.33\%.

$ \bullet $ \emph{\textbf{retail}}: it is a sparse e-commerce dataset, which contains 88,162 purchase records with 16,470  distinct products and an average transaction length of 10.30 products.

$ \bullet $ \textit{\textbf{kosarak}}: a very large dataset containing 990,002 transactions of click-stream data from a hungarian on-line news portal, it has 41,270 distinct products.

\subsection{Effectiveness Analytics}

The addressed OP3M problem aims at computing the $k$-satisfiable on-shelf most popular and profitable products. Thereby, this OPPP explicitly includes on-shelf availability, the frequency, and profit of contributions. How to know whether the results is interpreted or not? Numerous studies have shown that two metrics, frequency and utility, are good estimations of the importance of a pattern. And those frequency-based or utility-based methods have broad applications (see \cite{geng2006interestingness,gan2018survey,gan2020survey} for an overview). Therefore, it is make sense to evaluate the number, frequency, and profit of the mining results of OP3M. Results of different kinds of generated patterns under various parameters are shown in Table \ref{table:patterns1}. 

Notice that mushroom is first tested with a fixed  \textit{minfre}: 6\% and various \textit{minpro} from 5\% to 13\%, and then tested with a fixed \textit{minpro}: 5\% and various \textit{minfre} from 6\% to 14\%. Chess is tested in a similar way. For the retail dataset, it is performed under a fixed  \textit{minfre}: 0.07\% and various \textit{minpro} from 0.20\% to 0.28\%, then performed under a fixed  \textit{minpro}: 0.20\% and various \textit{minfre} from 0.05\% to 0.13\%. Besides, We ran OP3M algorithm on the same datasets but randomly grouped transactions into 5, 25 and 50 time periods, notice that \#${P5}$, \#${P25}$ and \#${P50}$ are the number of OPPPs which is respectively derived by the same OP3M algorithm on the dataset with different number of time periods 5, 25 and 50.

\begin{table}[htb]
\fontsize{6.5pt}{9pt}\selectfont
\centering
\caption{Derived patterns under various parameters}
\label{table:patterns1}
\begin{tabular}{|c|c|lllll|}
	\hline\hline
	\multirow{2}*{\textbf{Dataset}}&
	\multirow{2}*{\textbf{Pattern}}
	&\multicolumn{5}{c|}{\textbf{Results with varied threshold (\textit{minfre} or \textit{minpro})}}\\
	\cline{3-7}
	&& \textbf{test$_1$}  & \textbf{test$_2$}  &  \textbf{test$_3$}   & \textbf{test$_4$}  &   \textbf{test$_5$}   \\ \hline

	&  FPs & 1,843,327   & 1,843,327  &  1,843,327   &  1,843,327	&  1,843,327  \\
	&  HUPs &  -  &  - &   -  &   -	&  -   \\
	\textbf{mushroom}  &  \textbf{\#${P5}$} & 222,251 & 154,513 & 102,642 & 66,917	&  42,717	 \\
	(fix \textit{minfre}: 6\%) &\textbf{\#${P25}$} & 222,251	& 154,513	& 102,642	 &   66,917 	&  42,717 	\\
	&  \textbf{\#${P50}$} & 222,251 & 154,513 & 102,644 & 66,917	& 42,717	 \\
	\hline

	&  FPs &  1,843,327  &  650,003  &  600,817   &   150,137	&   104,629  \\
	&  HUPs &  -  & -  &  -   &   -	&   -  \\
	\textbf{mushroom} &  \textbf{\#${P5}$} & 222,251	&  102,452	& 94,882	& 30,659	& 25,579	\\
	(fix \textit{minpro}: 5\%) &\textbf{\#${P25}$} & 222,251	&  102,452	& 94,882	& 30,659	& 25,579	\\
	&  \textbf{\#${P50}$} &  222,251	&  102,452	& 94,882	& 30,659	& 25,579	\\
	\hline

	&  FPs & 1,272,932  & 1,272,932 &  1,272,932  & 1,272,932	&  1,272,932 \\
	&  HUPs & 32,324  & 14,114 &  5,847  & 2,304	&  859 \\
	\textbf{chess}  &  \textbf{\#${P5}$} & 16,596 & 8,639 & 4,146 & 1,848	& 761	 \\
	(fix \textit{minfre}: 50\%) &\textbf{\#${P25}$} & 16,596	& 8,639	& 4,146	& 1,848	& 761	\\
	&  \textbf{\#${P50}$} & 16,596 & 8,639 & 4,146 & 1,848	& 761	 \\
	\hline
	
	&  FPs & 2,832,777  & 1,272,932 &  574,998  & 254,944	&  111,239 \\
	&  HUPs & 14,114  & 14,114 &  14,114  & 14,114	&  14,114 \\
	\textbf{chess} &  \textbf{\#${P5}$} & 11,335 & 8,639 & 5,584 & 3,097	& 1,413	 \\
	(fix \textit{minpro}: 40\%) &\textbf{\#${P25}$} & 11,335	& 8,639	& 5,584	& 3,097	& 1,413	\\
	&  \textbf{\#${P50}$} & 11,335 & 8,639 & 5,584 & 3,097	& 1,413	 \\
	\hline

	&  FPs & 12,418  & 12,418 &  12,418  & 12,418	&  12,418 \\
	&  HUPs & 10,524  & 9,103 &  7,872  & 6,965	&  6,194 \\
	\textbf{retail} &  \textbf{\#${P5}$} & 5,092 & 4,850 & 4,619 & 4,384	& 4,159	 \\
	(fix \textit{minfre}: 0.07\%) &\textbf{\#${P25}$} & 13,696	& 12,943	& 12,149	& 11,437	& 10,675	\\
	&  \textbf{\#${P50}$} & 8,590,024 & 8,501,157 & 8,394,752 & 8,287,240	& 8,187,650	 \\
	\hline
	
	&  FPs & 19,242  & 12,418 &  8,829  & 6,749	&  5,282 \\
	&  HUPs & 10,524  & 10,524 &  10,524  & 10,524	&  10,524 \\
	\textbf{retail} &  \textbf{\#${P5}$} & 6,857 & 5,092 & 3,901 & 3,116	& 2,515	 \\
	(fix \textit{minpro}: 0.20\%) &\textbf{\#${P25}$} & 13,477,640	& 13,696	& 4,199	& 3,222	& 2,555	\\
	&  \textbf{\#${P50}$} & 73,1663,259 & 8,590,024 & 7,907,427 & 7,173,798	& 3,361	 \\
	\hline			
	
	\hline
	\hline
\end{tabular}
\end{table}

It can be clearly observed that the number of OPPPs is always different from that of FPs and HUPs under various \textit{minfre} and \textit{minpro} thresholds. Specifically, the \textit{minfre} and \textit{minpro} thresholds and the number of time periods all influence the results of OPPPs. However, the FPs is only influenced by \textit{minfre}, and the HUPs is only influenced by \textit{minpro}. For example, when setting \textit{minfre}: 0.07\% and  \textit{minpro}: 0.20\% on retail, the number of HUPs, \#${P5}$ (with 5 periods), \#${P25}$ (with 25 periods), and \#${P50}$ (with 50 periods), are respectively as  10,524, 5,092, 13,696 and 8,590,024, while there are 12,418 FPs. These patterns (HUPs and OPPPs with 5, 25 and 50 periods) respectively have the total utility as 3.76E7, 2.70E7, 2.97E7 and 1.40E9, but these results are not contained in Table \ref{table:patterns1} due to the space limit. When setting \textit{minfre}: 0.07\% and \textit{minpro}: 0.28\% on retail, the number of HUPs, \#${P5}$, \#${P25}$ and \#${P50}$ are changed to 6,194, 4,159, 10,675 and 8,187,650, and their total utility are 3.05E7, 2.55E7, 2.77E7 and 1.38E9, respectively. The reason is that a huge number of frequent patterns are always found, while few of them are on-shelf popular with high profit. Moreover, it is clear that both the period and popular factors affect the derived results of the addressed problem in terms of the number of patterns and the  achieved total utility. And larger the granularity of on-shelf period is, the higher revenue maximization can be achieved.

What is more, it is interesting to notice that the number of OPPPs using the developed OP3M framework may increase when the number of time periods in the processed dataset increases. This can be clearly observed from the results of \#${P5}$, \#${P25}$ and \#${P50}$, as shown in retail. Besides, the time period does not affect the very dense dataset since each transaction has the similar products and quantity, the on-shelf hot and profitable patterns within a short time period is likely to be an OPPP within a long time period. The distribution of $k$-patterns of the derived \#${P5}$, \#${P25}$ and \#${P50}$ is skipped here due to space limitation. In general, the frequent patterns do not usually contain a large portion of the desired profitable on-shelf patterns, the complete information (e.g., \textit{OSA}, profit) may be ignored. This implies the importance of inferring and understanding consumers' adoption behavior. In particular, using methods devised from information search and economics utility theory, we can focus on understanding the economic behavior of users from the periodic behavior in the historical data.

\subsection{Efficiency Analytics}

From Table \ref{table:patterns1}, we can observe that the influence of \textit{minfre} threshold, \textit{minpro} threshold, and the number of time periods. Furthermore, we performed an experiment to assess the influence of the number of time periods on the execution time with the same parameter setting at Table \ref{table:patterns1}. Notice that OP3M$ _{P5}$, OP3M$ _{P25}$ and OP3M$ _{P50}$ are respectively the running time of OP3M algorithm on the dataset with different number of time period 5, 25 and 50.  Results are shown in Fig. \ref{fig:Runtime} for the four  datasets. As we can see, the designed OP3M has much better scalability w.r.t to the number of periods on all datasets under various parameters. For example, when varying \textit{minrpro} on chess, as shown in Fig. \ref{fig:Runtime}(b), it always has OP3M$_{P5} <$ OP3M$_{P25} <$ OP3M$_{P50}$, the same trend can also be observed in Fig. \ref{fig:Runtime}(a) and Fig. \ref{fig:Runtime}(c). The reason why OP3M with less period performs better is that it mines all time periods at the same time; less period makes it earlier to achieve the conditions of pruning strategies in OP3M, thus leading to less computation time. OP3M mines them separately and merge results found in each-time periods, which degrades its performance when the number of time periods is large.

\begin{figure*}[!htbp]
\centering 
\includegraphics[scale=0.5]{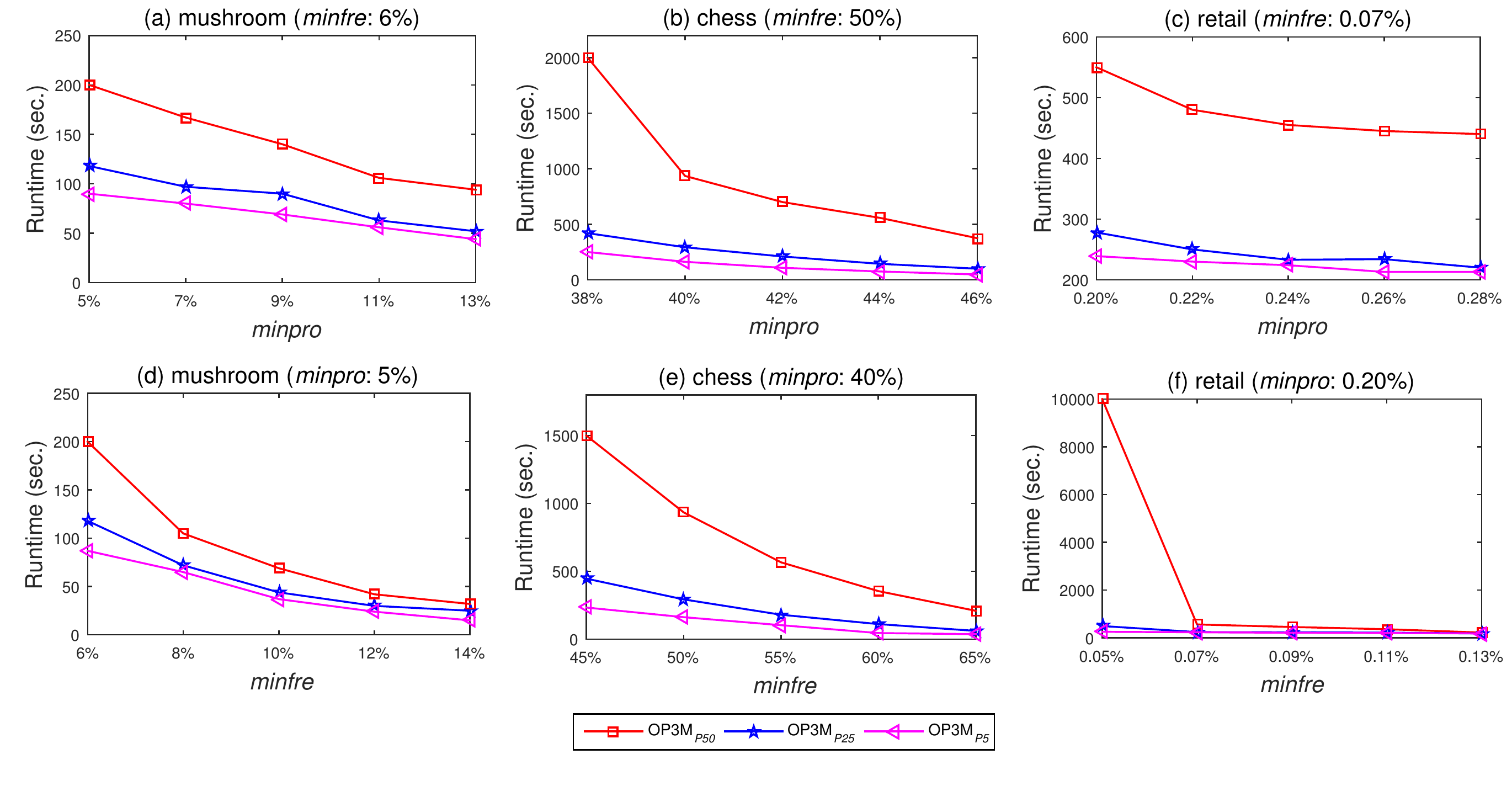}
\captionsetup{justification=centering}
\caption{Runtime under various parameters (\textit{minfre},  \textit{minpro} and period).}
\label{fig:Runtime}	
\end{figure*}

\begin{figure}[!htbp]
\centering
\includegraphics[trim=100 160 300 0,clip,scale=0.5]{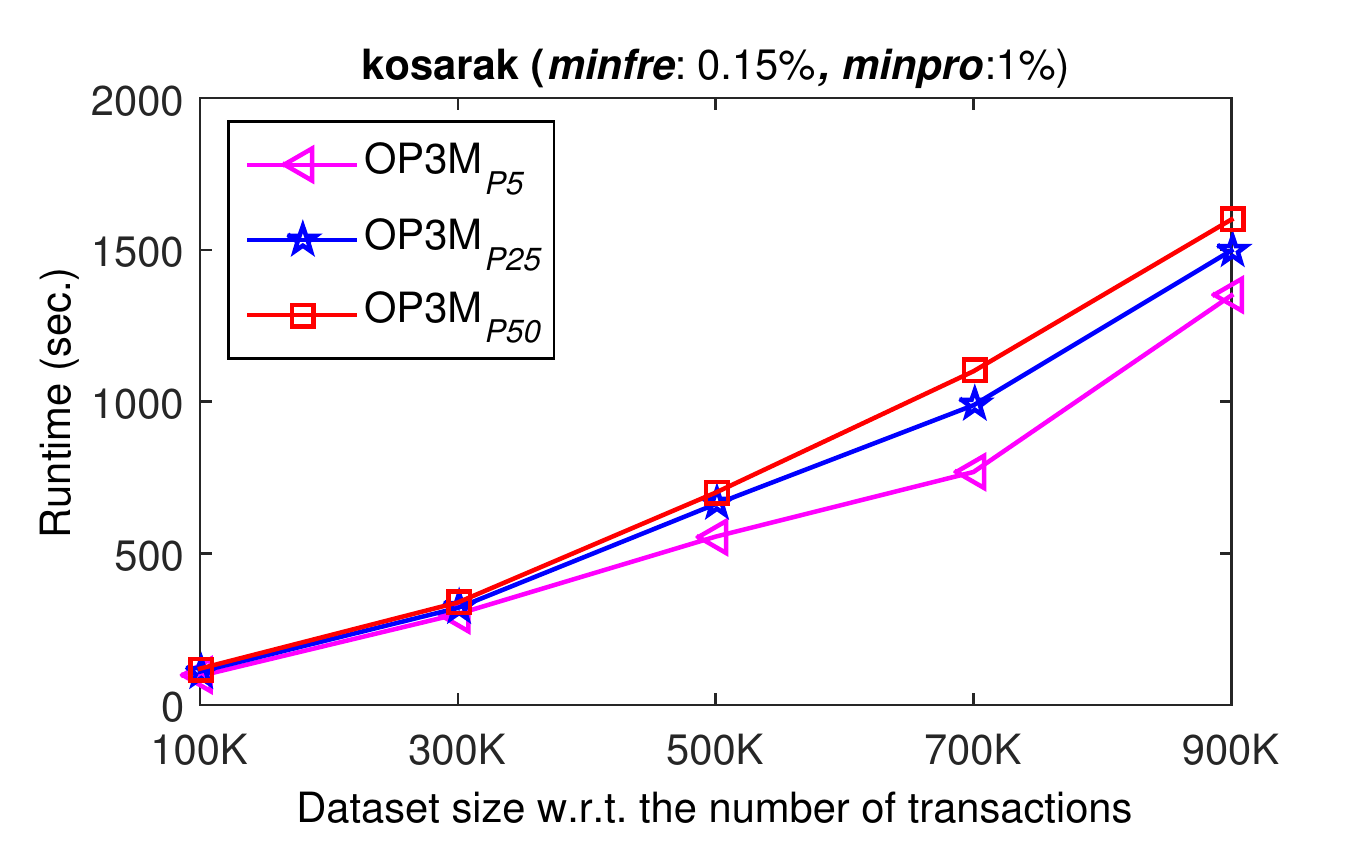}
\caption{Scalability test.}
\label{fig:scalability}
\end{figure}

Furthermore, we performed an experiment to asses the scalability of OP3M w.r.t the number of transactions. We ran the proposed profit-based OP3M model on kosarak dataset with \textit{minfre}: 0.15\% and \textit{minpro}: 1\%, and varied the number of transactions from 100,000 to 900,000. Results of scalability test are shown in Fig. \ref{fig:scalability}. In general, runtime is an estimate of how long it takes to perform such an analysis. Form the results, it can be observed that OP3M has linear scalability w.r.t the number of transactions.

\section{Conclusion} \label{sec:conclusion}

In this paper, we have presented a novel framework named OP3M for searching on-shelf popular and most profitable products in databases where both positive and negative profit values appeared and on-shelf time of products are considered. This is the first work to systematically study the problem of profit-based optimization for Economic behavior, including purchase frequency, on-shelf availability w.r.t. purchase time periodic, and profit. Given some historical datasets on market share, the designed algorithm can help us to make sense of the users' economic behavior and find the on-shelf popular and most profitable products. OP3M also brings several improvements over existing technologies. Based on the developed profit-based OPP-lists, it is a single phase algorithm that does not need to maintain candidates in memory. It relies on the novel concept named \textit{remaining positive profit}, and uses a depth-first search rather than a level-wise search. Moreover, OP3M finds OPPPs in all time periods at the same time rather than separately searching each period and performing costly intersection operations of the results of each time period. The extensive performance evaluation on several datasets demonstrates the effectiveness and efficiency of the OP3M algorithm.


\bibliographystyle{ACM-Reference-Format}
\bibliography{main}


\end{document}